\newtheorem{theorem}{Theorem}
\newtheorem{proposition}{Proposition}
\newtheorem{lemma}{Lemma}
\theoremstyle{definition}
\newtheorem{definition}{Definition}
\newcommand{\x}{\boldsymbol{x}}
\newcommand{\z}{\boldsymbol{z}}
\newcommand{\bmu}{\boldsymbol{\mu}}
\newcommand{\bdelta}{\boldsymbol{\delta}}
\newcommand{\I}{\boldsymbol{I}}
\newcommand{\GN}{\mathcal{N}}
\newcommand{\cS}{\mathcal{S}}
\newcommand{\cM}{\mathcal{M}}
\newcommand{\cI}{\mathcal{I}}
\newcommand{\cJ}{\mathcal{J}}
\newcommand{\R}{\mathbb{R}}
\newcommand{\E}{\mathbb{E}}
\newcommand{\T}{{\!\top\!}}
\begin{document}
%
\title{On Convergence of Epanechnikov Mean Shift}
\author{
	Kejun Huang\thanks{University of Minnesota, Minneapolis, MN 55414. Email: \texttt{huang663@umn.edu}}
	\qquad
	Xiao Fu\thanks{Oregon State University, Corvallis, OR 97331. Email: \texttt{xiao.fu@oregonstate.edu}}
	\qquad
	Nicholas D. Sidiropoulos\thanks{University of Virginia, Charlottesville, VA 22904. Email: \texttt{nikos@virginia.edu}}
}
\date{}
\maketitle

\begin{abstract}
Epanechnikov Mean Shift is a simple yet empirically very effective algorithm for clustering. It localizes the centroids of data clusters via estimating modes of the probability distribution that generates the data points, using the `optimal' Epanechnikov kernel density estimator.
However, since the procedure involves \emph{non-smooth} kernel density functions,
the convergence behavior of Epanechnikov mean shift lacks theoretical support as of this writing---most of the existing analyses are based on smooth functions and thus cannot be applied to Epanechnikov Mean Shift. In this work, we first show that the original Epanechnikov Mean Shift may indeed terminate at a non-critical point, due to the non-smoothness nature. Based on our analysis, we propose a simple remedy to fix it. The modified Epanechnikov Mean Shift is guaranteed to terminate at a local maximum of the estimated density, which corresponds to a cluster centroid, within a \emph{finite} number of iterations. We also propose a way to avoid running the Mean Shift iterates from every data point, while maintaining good clustering accuracies under non-overlapping spherical Gaussian mixture models. This further pushes Epanechnikov Mean Shift to handle very large and high-dimensional data sets. 
Experiments show surprisingly good performance compared to the Lloyd's $K$-means algorithm and the EM algorithm.
\end{abstract}

\section{Introduction}\label{Sec:intro}
Clustering is a fundamental problem in artificial intelligence and statistics~\cite{jain1999data}. The simplest form is arguably the $K$-means clustering, in which a set of data points\\ $\{\x_m\}_{m=1}^M \subseteq \R^d$ is given, and the objective is to separate them into $K$ clusters, such that the sum of the cluster variances is minimized. It has been shown that $K$-means clustering is NP-hard in general~\cite{aloise2009np,dasgupta2009random}, even though Lloyd's algorithm usually gives reasonably good approximate solutions~\cite{Lloyd1982} when $d$ is small. In fact, it has been used as a standard sub-routine for more complicated clustering tasks like spectral clustering~\cite{Ng2001} and subspace clustering~\cite{elhamifar2013sparse}, despite the fact that it is not guaranteed to give the global optimal solution.

Several attempts have been made to quantify cases under which we can provably cluster the data under a probabilistic generative model. Based on the Gaussian mixture model (GMM), instead of applying Lloyd's algorithm or Expectation-Maximization~\cite{dempster1977maximum}, this line of work devises sophisticated and somewhat conceptual methods that guarantee correct estimation of the GMM parameters under additional conditions. The first of this genre, to the best of our knowledge, is the work in~\cite{Dasgupta1999}, which shows that if the Gaussian components are almost disjoint, then the deflation-type method in~\cite{Dasgupta1999} correctly clusters the data with high probability. A more practical approach is later proposed in~\cite{Dasgupta2000} with the same performance guarantee, which is a two-round variant of the EM algorithm. A number of follow-up works try to further improve the bound on how close the Gaussian components can be~\cite{Arora2001,vempala2004spectral,Brubaker2008a,Moitra2010,Belkin2010}, but most of them focus on theoretical guarantees but not practical implementations. 

On the other hand, there is a simple and effective method for clustering called Mean Shift~\cite{Fukunaga1975,Cheng1995,Comaniciu2002} that has been popular in the field of computer vision. Two main versions of Mean Shift are the Epanechnikov Mean Shift and Gaussian Mean Shift, and the details will be explained in the next section. Compared to $K$-means and Gaussian mixture model, fewer theoretical results have been presented regarding Mean Shift. Compared to Gaussian Mean Shift, even less analysis exists for Epanechnikov Mean Shift, partly due to its non-smoothness nature, despite its simplicity and effectiveness.

The main contribution of this paper is the establishment of convergence for Epanechnikov Mean Shift. There have been many convergence studies on different variants and approximations to the original Epanechnikov Mean Shift. However, to the best of our knowledge, there is no analysis that directly addresses the original Epanechnikov Mean Shift, partially because non-smoothness of the kernel employed in the method poses a hard analytical problem.
Nevertheless, analyzing the convergence behavior of the original Epanechnikov Mean Shift is of great interest since it is based on the `optimal' kernel in density estimation.
In this work, we provide detailed functional analysis and rigorous proof for Epanechnikov Mean Shift. We first show that the method indeed can get swamped at some undesired points, but with very simple modification it is guaranteed to reach a local optimum of the data density function that corresponds to a cluster centroids \emph{within finite number of iterations}.
This is the first analytical result that backs convergence behaviors of the Epanechnikov Mean Shift and the proposed modification is of practical significance.
Inspired by the idea of deflation by~\cite{Dasgupta1999} and follow-up works, we also propose a deflation-variant of the Mean Shift algorithm that is guaranteed to correctly cluster the data one group at a time under some additional conditions. This strategy saves a huge amount of computations compared to the original version and thus suits large-scale clustering problems.

\subsection{Illustrative Example}

Before we delve into convergence analysis of Epanechnikov Mean Shift, we give a simple illustrative example to showcase its effectiveness in clustering---which explains the reason why this particular method interests us.
Specifically, we test the performance of the proposed deflation-based Epanechnikov Mean Shift and some classic clustering methods, including Lloyd's $K$-means algorithm, Expectation-Maximization (EM) for Gaussian mixture models (GMM), the two-round variant of EM by~\cite{Dasgupta2000}, and the original Epanechnikov Mean Shift. The experiment is conducted in MATLAB, with the build-in implementation of $K$-means clustering and EM for GMM. Notice that these are well-implemented $K$-means / EM algorithms, with smart initialization suggested by $K$-means++~\cite{arthur2007k}, and/or various parallel implementation / multiple re-start enhancements that have been shown to work well in practice. The two-round variant of EM~\cite{Dasgupta2000} is mathematically proven to work well with high probability when the clusters are non-overlapping.

\begin{figure*}
\centering
\hspace*{-40pt}
\includegraphics[width=1.2\textwidth]{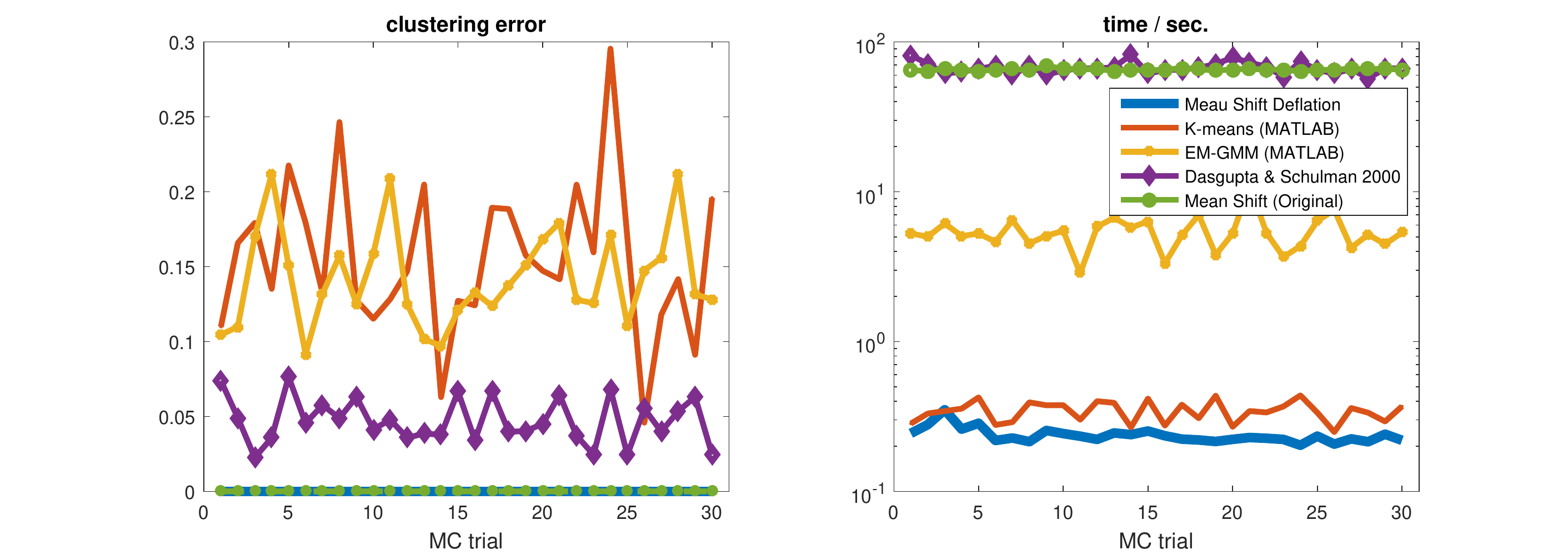}
\caption{100 Monte-Carlo simulations on synthetic data. Left: clustering error; right: run time.}
\label{fig:syn_exp}
\end{figure*}

The experiments are conducted on a synthetic dataset $\{\x_m\}_{m=1}^M\subseteq\R^d$ generated as follows. For $d=100$, we prescribe $K=30$ clusters (Gaussian components). For cluster $k$, we first randomly generate its centroid $\bmu_k\sim\GN(0,4\I)$, and then generate $M_k=50k$ i.i.d. data points from $\GN(\bmu_k,\I)$. Then we hide the cluster labels and feed the data into various algorithms. 
For the two versions of the Mean Shift algorithm, there is no need to indicate the number of clusters $K$ before hand---the only parameter (kernel bandwidth) is tuned by leave-one-out cross-validation, and then the algorithm automatically detects the number of clusters in the data set. 
For the other methods, the correct number of clusters $K$ is given, which means they are already using more information than the Mean Shift-based methods.
The procedure is repeated 30 times. In each Monte-Carlo trial, the obtained cluster labels are aligned with the true labels using the Hungarian algorithm~\cite{kuhn1955hungarian}. The clustering error is then calculated as the ratio of wrongly labeled data points over the total number of data points. 

The clustering error and runtime for each Monte-Carlo trial are shown in Figure~\ref{fig:syn_exp}.
The first observation is that plain vanilla $K$-means and EM do not cluster the data very accurately, even though there exists a good clustering structure according to how we generate the data. The clustering error is greatly reduced if we adopt the two-round variant of EM~\cite{Dasgupta2000}, with the compromise of a significantly higher amount of computation time. However, since the different Gaussian components are just marginally separated (and the sizes of each cluster are somewhat unbalanced), the performance is not as good as we might expect according to~\cite{Dasgupta2000}. Mean Shift-based methods, on the other hand, give the surprising zero clustering error in all cases; considering the fact that the correct number of clusters $K$ is not given to these methods, the results look even more impressive. In terms of computation time, the original Mean Shift takes similar time as that of~\cite{Dasgupta2000}, whereas the proposed Mean Shift deflation takes, remarkably, \emph{the least} amount of time, even compared to the simple $K$-means.

In this paper, we will study convergence properties of the original Epanechnikov Mean Shift and its deflation variant, and explain the reasons behind its effectiveness.

\section{Background: KDE and Mean Shift}\label{Sec:background}

The intuition behind Mean Shift for clustering is as follows. Suppose we have the probability density function (PDF) $p(\z)$ of the dataset $\{\x_m\}_{m=1}^M \subseteq \R^d$. If the PDF $p(\z)$ has $K$ modes, then we expect $K$ clusters in this dataset. Furthermore, if we run an optimization algorithm, e.g. gradient descent, initialized at a data point $\x_m$, and it converges to the $k$-th mode, then we declare that $\x_m$ belongs to the $k$-th cluster.

\subsection{Kernel Density Estimation}
In practice, we do not have access to the PDF $p(\z)$, but only the set of data points $\{\x_m\}_{m=1}^M$. To implement the aforementioned intuition, one needs to first estimate the PDF $p(\z)$---this is called density estimation~\cite{scott2015multivariate}. The most popular approach for density estimation is the so-called kernel density estimator (KDE). For a given kernel function $K(\z)$ that satisfies
\[
K(\z)\geq0 \text{~and~} \int K(\z)d\z = 1,
\]
the corresponding KDE is simply
\[
\widehat{p}(\z) = \frac{1}{M}\sum_{m=1}^{M}K(\z-\x_m).
\]
Two popular choices of the kernels are the Gaussian kernel
\begin{equation}\label{eq:gaussian}
K_G(\z;w) = \frac{c}{w^d}\exp\left(-\frac{\|\z\|^2}{2w^2}\right)
\end{equation}
and the Epanechnikov kernel
\begin{equation}\label{eq:epan}
K_E(\z;w) = \frac{c}{w^d}\left[1-\frac{\|\z\|^2}{w^2}\right]_+,
\end{equation}
where $c$ in~\eqref{eq:gaussian} and~\eqref{eq:epan} are normalizing constants ensuring that the kernel integrates to one.
Each of them (and all other kernels) are parameterized by a scalar $w$, called the \emph{bandwidth}, which controls the variance of the kernel. It has been shown that the Epanechnikov kernel asymptotically minimizes the mean squared error (MSE)
\begin{equation}\label{eq:mse}
\int \left(p(\z) - \widehat{p}(\z)\right)^2 d\z,
\end{equation}
among all possible kernel functions~\cite{Epanechnikov1969}. Somewhat surprisingly, this `optimal' kernel is a highly non-smooth function.

The bandwidth of the kernel $w$ plays an important role on how well the KDE approximates the true density. In practice, one can adopt the \emph{leave-one-out cross validation} approach to determine this parameter, as we did in this work. The MSE of the estimated density (and the unknown true density)~\eqref{eq:mse} can be separated into three terms:
\[
\int p^2(\z)d\z + \int \widehat{p}^2(\z)d\z -2\E\{\widehat{p}(\z)\}.
\]
The first term is unknown, but a constant; the second term can be directly calculated; and the third term is estimated via leave-one-out cross-validation. This quantity is evaluated at a set of values for $w$, and the one that gives the minimum value is selected as the bandwidth for the KDE.

\subsection{Mean Shift}
Based on the KDE $\widehat{p}(\z)$, the Mean Shift algorithm tries to find modes of $\widehat{p}(\z)$ via the following (weighted average) iterates~\cite{Fukunaga1975,Cheng1995,Comaniciu2002}, initialized at each $\x_m$:
\[
\z \leftarrow \frac{1}{\sum_{m=1}^M g(\|\z-\x_m\|^2)}
\sum_{m=1}^M g(\|\z-\x_m\|^2)\x_m,
\]
where $g(\cdot)$ is called the profile for the kernel function $K(\cdot)$. Putting details aside, the profile for the Gaussian kernel is
$\exp(\|\z-\x_m\|^2/2w^2)$,
and that for the Epanechnikov kernel is
the indicator function $\mathbb{1}(\|\z-\x_m\|^2<w^2)$. 

Existing analyses for convergence properties of Mean Shift are mostly based on smooth optimization, and argue that the update is always going at the gradient ascent direction. Borrowing the convergence results for gradient-based methods, it is then claimed that the Mean Shift iterates converges to a local maximum of $\widehat{p}(\z)$. On hindsight, we make the following comments:
\begin{enumerate}
\item The \emph{optimal} Epanechnikov kernel is non-smooth, so the existing convergence claims cannot establish convergence to a local optimum---which asymptotically approaches a mode of ${p(\bm z)}$. In fact, we will show that the plain vanilla Epanechnikov Mean Shift may indeed get stuck at a non-critical point, and we will provide a simple remedy to fix it.
\item For smooth kernels like the Gaussian kernel, it is indeed easy to show that the algorithm converges to a stationary point. However, not all stationary points are local optima---there may exist saddle points, and there is in general no simple way to check whether it is a local optimum or not.
\item An interesting observation is that Mean Shift with smooth kernels usually converges slower than Epanechnikov kernel. The convergence rate for Gaussian Mean Shift can be as slow as sub-linear~\cite{Carreira-Perpinan2007}, whereas Epanechnikov Mean Shift \emph{terminates} in finite number of steps~\cite{Comaniciu2002}, although a rigorous proof for this claim is still missing.
\end{enumerate}

The remainder of the paper tries to bridge the gap between the good empirical performance and lack of rigorous theoretical analysis for the Epanechnikov Mean Shift. We show that, with a simple modification, Epanechnikov Mean Shift terminates at a local maximum of $\widehat{p}(\z)$ within finite number of iterations. Even though the objective function is non-convex and non-smooth, the convergence result is surprisingly strong: It is guaranteed to terminate at a local optimum, \emph{never} at a saddle point, and the number of iterations is finite, with \emph{zero} precision accuracies.

For completeness, the Epanechnikov Mean Shift is clearly written in Algorithm~\ref{alg:1}. We shall call the iterative procedure between line~\ref{line:alg1-start}--\ref{line:alg1-end} ``Epanechnikov Mean Shift \emph{iterates}'', and the entire algorithm as Epanechnikov Mean Shift, which initializes the iterates at every data point $\x_m$.

\begin{algorithm}[t]
	\caption{Epanechnikov Mean Shift}
	\label{alg:1}
	\begin{algorithmic}[1]
		\REQUIRE $\{\x_m\}_{m=1}^M$, $w^2$
		\FOR{$m=1,...,M$}
		\STATE initialize $\z_m\leftarrow\x_m$
		\REPEAT[Epanechnikov Mean Shift iterates]\label{line:alg1-start}
		\STATE $\cI(\z_m) \leftarrow \{i\in[M]: \|\x_i-\z_m\|^2 < w^2 \}$
		\STATE $\displaystyle\z_m \leftarrow \frac{1}{|\cI(\z_m)|}\sum_{i\in\cI(\z_m)}\x_i$
		\UNTIL{convergence (cf. Alg.~\ref{alg:2})}\label{line:alg1-end}
		\ENDFOR
		\STATE find $K$ distinct vectors in $\{\z_m\}_{m=1}^M$, denote as $\{\bmu_k\}_{k=1}^K$
		\STATE $\x_m$ in cluster $k$ if $\z_m=\bmu_k$.
	\end{algorithmic}
\end{algorithm}

\section{Function Analysis}\label{Sec:analysis}

The Epanechnikov Mean Shift iterates tries to find modes (i.e., local maxima) of the KDE $\widehat{p}(\z)=\sum K_E(\z-\x_m;w)$ with the Epanechnikov kernel. As per conventions in the field of optimization, we define functions $\phi$ and $f$ by flipping the sign of $K_E$ and $\widehat{p}$, and omitting constants and scalings, which do not affect the task of optimization:
\begin{align}
\phi(\z) &= \min(\|\z\|^2,w^2), \label{eq:phi}\\
f(\z) &= \sum_{m=1}^{M}\phi(\x_m-\z). \label{eq:f}
\end{align}
Obviously, modes of $\widehat{p}(\z)$ correspond to local minima of $f(\z)$.
We start by analyzing the basic properties of the loss function~\eqref{eq:f}.

\begin{lemma}\label{lemma:smooth}
	The function $f(\z)$ is smooth almost everywhere.
\end{lemma}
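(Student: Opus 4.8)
The plan is to reduce the smoothness of $f$ to the smoothness of each summand $\phi(\x_m - \z)$, since a finite sum of functions is smooth at a point whenever every summand is. So first I would analyze $\phi(\z) = \min(\|\z\|^2, w^2)$ on its own. The function $\|\z\|^2$ is everywhere smooth (in fact, polynomial), and the constant $w^2$ is everywhere smooth; the pointwise minimum of two smooth functions is smooth wherever the two functions do not cross, i.e. wherever $\|\z\|^2 \neq w^2$. Hence $\phi$ is $C^\infty$ on $\R^d \setminus \{\z : \|\z\| = w\}$, and the only candidate points of non-smoothness lie on the sphere $\|\z\| = w$ of radius $w$ centered at the origin.

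Next I would translate this back to $f$. The summand $\z \mapsto \phi(\x_m - \z)$ is smooth except possibly on the sphere $\cS_m = \{\z : \|\z - \x_m\| = w\}$ of radius $w$ centered at $\x_m$. Therefore $f$ is smooth on $\R^d \setminus \bigcup_{m=1}^{M} \cS_m$, because near any point outside this union every term is smooth. It then remains to argue that $\bigcup_{m=1}^M \cS_m$ is a measure-zero (indeed nowhere dense) set: each sphere $\cS_m$ is a $(d-1)$-dimensional submanifold of $\R^d$, hence has $d$-dimensional Lebesgue measure zero, and a finite union of measure-zero sets has measure zero. This gives "smooth almost everywhere" in the Lebesgue sense; I would also remark that the complement is open and dense, so smoothness holds on an open dense set as well.

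I expect the main (and really only) subtlety to be pinning down exactly what "smooth almost everywhere" should mean and making the crossing argument for the $\min$ airtight: one must note that at a point where $\|\z\|^2 < w^2$ strictly, a whole neighborhood satisfies the same strict inequality (by continuity of $\|\cdot\|^2$), so $\phi$ agrees with the smooth function $\|\z\|^2$ on that neighborhood and is therefore smooth there; symmetrically for $\|\z\|^2 > w^2$. The only genuinely excluded set is the crossing locus $\|\z\|^2 = w^2$, and even there one could note $\phi$ is still continuous (and Lipschitz), though not differentiable in the radial direction. Beyond that, the argument is a routine application of the facts that finite sums and pointwise minima of smooth functions inherit smoothness away from crossing sets, and that finite unions of lower-dimensional spheres are Lebesgue-null; no delicate estimates are needed. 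I would close by emphasizing that this is precisely why the later analysis must treat the behavior of the iterates as they cross these spheres $\cS_m$ carefully, since that is the only place $f$ can fail to be differentiable.
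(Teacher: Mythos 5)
Your proposal is correct and follows essentially the same route as the paper: reduce smoothness of $f$ to smoothness of each summand $\phi(\x_m-\z)$, identify the non-smooth locus of each summand as the sphere $\|\x_m-\z\|=w$, and conclude via the fact that a finite union of such spheres has Lebesgue measure zero. Your additional remarks (the neighborhood argument for the $\min$ and the open-dense observation) only flesh out details the paper leaves implicit.
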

\begin{proof}
	Notice that $f(\z)$ is a summation of component functions $\phi(\x_m-\z)$, so $f(\z)$ is smooth at $\z$ if and only if $\forall~m=1,...,M$, $\phi(\x_m-\z)$ is smooth at $\z$.	
	According to the definition of $\phi$ in~\eqref{eq:phi}, $\phi(\x_m-\z)$ is non-smooth iff $\|\x_m-\z\|^2=r$, which forms a set that has Lebesgue measure zero in $\R^d$. Because $\{\x_m\}_{m=1}^M$ is a finite set, the union set
	\[
	\{\z:\|\x_1-\z\|^2=w^2 \} \cup ... \cup \{\z:\|\x_M-\z\|^2=w^2 \},
	\]
	which forms the set of non-smooth points for $f(\z)$, also has Lebesgue measure zero. In other words, the function $f(\z)$ is smooth almost everywhere.
\end{proof}

\begin{lemma}\label{lemma:convex_smooth}
	At every smooth point $\z$ of $f(\z)$, define $\cI(\z)=\{i:\|\x_i-\z\|^2<w^2\}$, then we have
	\begin{equation}\label{eq:grad}
		\nabla f(\z) = \sum_{i\in\cI(\z)}2(\z-\x_i), 
	\end{equation}
	\begin{equation}\label{eq:hessian}
		\nabla^2 f(\z)= 2|\cI(\z)|\I. 
	\end{equation}
	Therefore, $f(\z)$ is locally convex at every smooth point, and strongly convex if $\cI$ is not empty.
\end{lemma}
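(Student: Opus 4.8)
The plan is to work pointwise at a fixed smooth point $\z$, using the characterization of smooth points established in Lemma~\ref{lemma:smooth}: $\z$ is smooth iff $\|\x_m - \z\|^2 \neq w^2$ for every $m$. First I would observe that, since strict inequality is an open condition and we have only finitely many data points, there is a radius $\rho > 0$ such that on the ball $B(\z,\rho)$ the index set $\cI(\cdot)$ is \emph{constant}, equal to $\cI(\z)$: no point $\x_i$ with $\|\x_i - \z\|^2 < w^2$ can cross the threshold within $B(\z,\rho)$, and likewise no point with $\|\x_i - \z\|^2 > w^2$ can. Hence on this neighborhood the active branch of each $\phi(\x_m - \cdot)$ is fixed, so
\[
f(\y) = \sum_{i\in\cI(\z)} \|\x_i - \y\|^2 + \big(\text{const}\big), \qquad \y \in B(\z,\rho),
\]
where the constant collects the $w^2$ terms from the indices not in $\cI(\z)$.

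Next I would simply differentiate this local expression. Each summand $\|\x_i - \y\|^2$ has gradient $2(\y - \x_i)$ and Hessian $2\I$, giving \eqref{eq:grad} and \eqref{eq:hessian} immediately by linearity; the constant term contributes nothing. Since the Hessian $2|\cI(\z)|\I$ is positive semidefinite everywhere on $B(\z,\rho)$, $f$ is convex on that neighborhood, i.e.\ locally convex at $\z$; and when $\cI(\z)\neq\emptyset$ the Hessian is $2|\cI(\z)|\I \succ 0$, so $f$ is locally strongly convex there with modulus $2|\cI(\z)|$.

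The only subtlety — and it is minor — is the justification that $\cI(\cdot)$ is locally constant at a smooth point; this is where the smoothness hypothesis is actually used, and it reduces to the elementary fact that a finite intersection of open conditions is open, applied to the sets $\{\,\y : \|\x_i - \y\|^2 < w^2\,\}$ and $\{\,\y : \|\x_i - \y\|^2 > w^2\,\}$. Once that is in hand, everything else is a one-line computation, so I do not expect any real obstacle. One should only be mildly careful to state that "locally convex" and "locally strongly convex" are meant in the sense of convexity of the restriction of $f$ to a sufficiently small ball around $\z$, since $f$ itself is neither convex nor smooth globally.
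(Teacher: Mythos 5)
Your proposal is correct and follows essentially the same route as the paper's proof: the paper likewise notes that on a small ball of smooth points the index set $\cI(\cdot)$ is constant, so $f$ is locally a sum of quadratics plus a constant, and the gradient and Hessian follow by direct differentiation. Your explicit justification that $\cI(\cdot)$ is locally constant (via openness of the finitely many strict inequalities) is just a more careful rendering of the step the paper calls ``elementary.''
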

\begin{proof}
	If $\z$ is a smooth point, there does not exist a $\x_j$ such that $\|\x_j-\z\|^2=w^2$. The expressions for the gradient and Hessian are elementary. For a small hyper-ball containing only smooth points, the index set $\cI(\z)$ remains the same in this convex region, therefore the Hessian remains the same in this area. Since $\nabla^2f(\z)\succeq0$, the function $f(\z)$ is locally convex. Furthermore, if $\cI(\z)\neq\emptyset$, then $\nabla^2f(\z)\succeq\I$, in which case $f(\z)$ is locally strongly convex.
\end{proof}

We now switch our focus to the non-smooth points of $f(\z)$. To study their properties, we use the concept of \emph{directional derivative}, which is defined as~\cite{bertsekas1999nonlinear}
\begin{equation}\label{eq:dir_dev}
f^\prime(\z;\bdelta) \triangleq \lim_{\alpha\downarrow0}\frac{f(\z+\alpha\bdelta)-f(\z)}{\alpha},
\end{equation}
for a particular direction $\bdelta$, if the limit exists. 
The definition~\eqref{eq:dir_dev} clearly shows that $\bdelta$ is a descending direction if $f^\prime(\z;\bdelta)<0$.
The directional derivative obeys the sum rule
\begin{equation}\label{eq:sum_rule}
f^\prime(\z;\bdelta) = \sum_{m=1}^M\phi^\prime(\x_m-\z;\bdelta).
\end{equation}
Furthermore, if $f$ is smooth at a point $\z$, then the directional derivative is simply
\(
f^\prime(\z;\bdelta) \!=\! \nabla\!f(\z)^\T\bdelta.
\)
For a non-smooth function, we can define a stationary point as follows~\cite{razaviyayn2013unified}:
\begin{definition}\label{def:stationary}
	The point $\z$ is a stationary point of $f(\cdot)$ if $f^\prime(\z;\bdelta)\geq0$ for all $\bdelta$.
\end{definition}
Notice that if $\z$ is a smooth point for $f$, Definition~\ref{def:stationary} reduces to $\nabla\!f(\z)^\T\bdelta\geq0$ for all $\bdelta$, which implies $\nabla\!f(\z)=0$, the usual definition of a stationary point for smooth functions.

\begin{lemma}\label{lemma:dd}
	The directional derivative of $f$ at $\z$ with direction $\bdelta$ is
	\begin{equation}\label{eq:dd}
		f^\prime(\z;\bdelta) = \sum_{i\in\cI(\z)}2(\z-\x_i)^\T\bdelta + 
				\hspace*{-5pt} \sum_{j\in\cJ(\z;\bdelta)} \hspace*{-5pt} 2(\z-\x_j)^\T\bdelta,
	\end{equation}
	where 
	\begin{align*}
		\cI(\z) &= \{i:\|\x_i-\z\|^2<w^2\}, \\
		\cJ(\z;\bdelta) &= \{j:\|\x_j-\z\|^2=w^2,(\z-\x_j)^\T\bdelta<0\}.
	\end{align*}
\end{lemma}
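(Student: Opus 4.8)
The plan is to reduce, via the sum rule \eqref{eq:sum_rule}, to computing the one-sided derivative of a single component $\z\mapsto\phi(\x_m-\z)$ in the direction $\bdelta$, i.e.\ the limit $\lim_{\alpha\downarrow0}\alpha^{-1}\bigl(\phi(\x_m-\z-\alpha\bdelta)-\phi(\x_m-\z)\bigr)$, and to show along the way that this limit always exists. The only computational tool needed is the elementary expansion
\[
\|\x_m-\z-\alpha\bdelta\|^2 = \|\x_m-\z\|^2 - 2\alpha(\x_m-\z)^\T\bdelta + \alpha^2\|\bdelta\|^2,
\]
combined with a case split according to how $\|\x_m-\z\|^2$ compares with $w^2$.

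First I would dispose of the two ``generic'' cases. If $\|\x_m-\z\|^2<w^2$ (that is, $m\in\cI(\z)$), then for all sufficiently small $\alpha>0$ the right-hand side above stays below $w^2$, so near $\alpha=0$ we have $\phi(\x_m-\z-\alpha\bdelta)=\|\x_m-\z-\alpha\bdelta\|^2$, which is smooth in $\alpha$; differentiating and letting $\alpha\downarrow0$ gives the contribution $-2(\x_m-\z)^\T\bdelta=2(\z-\x_m)^\T\bdelta$, matching the first sum in \eqref{eq:dd}. If $\|\x_m-\z\|^2>w^2$, then for small $\alpha>0$ the expansion stays above $w^2$, so $\phi(\x_m-\z-\alpha\bdelta)\equiv w^2$ locally and the contribution is $0$; such indices do not appear in \eqref{eq:dd}.

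The crux is the boundary case $\|\x_m-\z\|^2=w^2$, where $\phi$ is non-smooth. Here the expansion reads $\|\x_m-\z-\alpha\bdelta\|^2 = w^2 - 2\alpha(\x_m-\z)^\T\bdelta + \alpha^2\|\bdelta\|^2$, and for small $\alpha>0$ the sign of the perturbation $-2\alpha(\x_m-\z)^\T\bdelta+\alpha^2\|\bdelta\|^2$ is governed by its linear term whenever $(\x_m-\z)^\T\bdelta\neq0$. If $(\x_m-\z)^\T\bdelta>0$, equivalently $(\z-\x_m)^\T\bdelta<0$, the perturbed point falls strictly inside the ball, so $\phi(\x_m-\z-\alpha\bdelta)=\|\x_m-\z-\alpha\bdelta\|^2$ for small $\alpha$ and the contribution is again $2(\z-\x_m)^\T\bdelta$; this is exactly the condition defining $\cJ(\z;\bdelta)$. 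Otherwise---either $(\z-\x_m)^\T\bdelta>0$, or the knife-edge $(\z-\x_m)^\T\bdelta=0$, in which the $\alpha^2$ term pushes the squared norm above $w^2$---we get $\phi(\x_m-\z-\alpha\bdelta)\equiv w^2$ for small $\alpha$ and the contribution is $0$. In every case the difference quotient converges (the residual is $O(\alpha)$), so $f^\prime(\z;\bdelta)$ is well defined, and summing the per-component contributions over $\cI(\z)$ and $\cJ(\z;\bdelta)$ yields \eqref{eq:dd}.

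I expect the main obstacle to be precisely this boundary analysis: one must read off from the second-order expansion which side of the sphere $\|\x_j-\z\|^2=w^2$ the displaced point $\x_j-\z-\alpha\bdelta$ lands on, and then check that the \emph{strict} inequality in the definition of $\cJ(\z;\bdelta)$ selects exactly the boundary indices that actually contribute---the knife-edge case $(\z-\x_j)^\T\bdelta=0$ being harmless since it would contribute a zero term in any event. Everything else reduces to routine differentiation of $\|\cdot\|^2$.
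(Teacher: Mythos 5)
Your proposal is correct and follows essentially the same route as the paper: apply the sum rule, split into the cases $\|\x_m-\z\|^2<w^2$, $>w^2$, $=w^2$, and in the boundary case use the second-order expansion of the perturbed squared norm to decide which branch of $\min(\cdot,w^2)$ is active for small $\alpha>0$. Your handling of the knife-edge $(\z-\x_j)^\T\bdelta=0$ and of the sign conventions is, if anything, slightly more careful than the paper's.
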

\begin{proof}
For a smooth point of $\phi(\z)$, it is easy to see that $\phi^\prime(\z;\bdelta)=2\z^\T\bdelta$ if $\|\z\|^2<w^2$, and $\phi^\prime(\z;\bdelta)=0$ if $\|\z\|^2>w^2$. For a non-smooth point when $\|\z\|^2=w^2$, we find its directional derivative by resorting to the definition~\eqref{eq:dir_dev}: $\phi^\prime(\z;\bdelta)$ equals to $2\z^\T\bdelta$ or $0$ depending on whether $\|\z+\alpha\bdelta\|^2$ is less than $w^2$ or not, when $\alpha$ goes to zero. Since
\[
\|\z+\alpha\bdelta\|^2 = \|\z\|^2 + 2\alpha\z^\T\bdelta + \alpha^2\|\bdelta\|^2
= w^2 + 2\alpha\z^\T\bdelta + o(\alpha^2),
\]
we see that $\|\z+\alpha\bdelta\|^2<w^2$ iff $\z^\T\bdelta<0$. Therefore,
\begin{equation}\label{eq:dphi}
	\phi^\prime(\z;\bdelta) = \begin{cases}
		2\z^\T\bdelta, & \!\!\|\z\|<w^2 \text{, or } \|\z\|=w^2 \text{ and } \z^\T\bdelta<0,\\
		0, &  \!\!\|\z\|>w^2 \text{, or } \|\z\|=w^2 \text{ and } \z^\T\bdelta\geq0.
	\end{cases}
\end{equation}
Now using the sum rule for the directional derivative, we conclude that $f^\prime(\z;\bdelta)$ is as given in~\eqref{eq:dd}.
\end{proof}

From the expression of $f^\prime(\z;\bdelta)$, we can show the following interesting claims:
\begin{proposition}\label{prop:not_stationary}
If $\z$ is a non-smooth point for $f(\z)$, then $\z$ cannot be a stationary point.
\end{proposition}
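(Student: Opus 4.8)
The plan is to exhibit, for an arbitrary non-smooth point $\z$, an explicit direction $\bdelta$ with $f^\prime(\z;\bdelta)<0$; by Definition~\ref{def:stationary} this rules out stationarity at once.

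First I would rewrite the directional-derivative formula of Lemma~\ref{lemma:dd} in a form that isolates the role of the non-smooth (``boundary'') terms. Put $\boldsymbol{g}\triangleq\sum_{i\in\cI(\z)}2(\z-\x_i)$ and $\cM(\z)\triangleq\{m:\|\x_m-\z\|^2=w^2\}$. Since a boundary index $j$ belongs to $\cJ(\z;\bdelta)$ precisely when $(\z-\x_j)^\T\bdelta<0$ and then contributes $2(\z-\x_j)^\T\bdelta$, while contributing nothing otherwise, equation~\eqref{eq:dd} reads, for every $\bdelta$,
\[
f^\prime(\z;\bdelta)=\boldsymbol{g}^\T\bdelta+\sum_{j\in\cM(\z)}\min\bigl\{0,\,2(\z-\x_j)^\T\bdelta\bigr\}.
\]
The key structural observation is that every boundary summand is \emph{non-positive}, whatever $\bdelta$ is. Hence any $\bdelta$ that makes the smooth part $\boldsymbol{g}^\T\bdelta$ strictly negative is already a descent direction, and --- when $\boldsymbol{g}=\boldsymbol{0}$ --- so is any $\bdelta$ that makes a single boundary inner product strictly negative.

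Next I would split on whether $\boldsymbol{g}$ vanishes. If $\boldsymbol{g}\neq\boldsymbol{0}$, I take $\bdelta=-\boldsymbol{g}$, so that $f^\prime(\z;-\boldsymbol{g})\le-\|\boldsymbol{g}\|^2<0$. If $\boldsymbol{g}=\boldsymbol{0}$ (which subsumes $\cI(\z)=\emptyset$), I use that non-smoothness of $f$ at $\z$ forces $\cM(\z)\neq\emptyset$ --- this is exactly the non-smooth locus identified in the proof of Lemma~\ref{lemma:smooth}. Picking $j_0\in\cM(\z)$ and $\bdelta=\x_{j_0}-\z$, which is nonzero because $\|\x_{j_0}-\z\|^2=w^2>0$, the $j_0$-th boundary term equals $2(\z-\x_{j_0})^\T(\x_{j_0}-\z)=-2w^2<0$ while the remaining terms are $\le0$, so $f^\prime(\z;\bdelta)<0$. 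In both cases we have produced a $\bdelta$ violating the condition $f^\prime(\z;\bdelta)\ge0$ of Definition~\ref{def:stationary}, hence $\z$ is not stationary.

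The only mildly delicate point is the very first rewriting: one must be sure the ``$\|\x_j-\z\|^2=w^2$'' terms are genuinely one-sided, i.e. that $\phi^\prime$ contributes the slope $2(\z-\x_j)^\T\bdelta$ only on the descending side and $0$ otherwise --- but this is precisely what~\eqref{eq:dphi} in the proof of Lemma~\ref{lemma:dd} already establishes. After that the argument is essentially free, and the degenerate sub-case $\boldsymbol{g}=\boldsymbol{0}$ is exactly where the hypothesis of non-smoothness (a nonempty boundary set $\cM(\z)$) supplies the descent direction on its own.
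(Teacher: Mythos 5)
Your proof is correct and follows essentially the same route as the paper's: the same observation that the boundary terms in~\eqref{eq:dd} are always non-positive, the same case split on whether the smooth-part gradient $\sum_{i\in\cI(\z)}2(\z-\x_i)$ vanishes, and the same choice of descent directions in each case. The $\min\{0,\cdot\}$ rewriting is just a tidier packaging of the same argument.
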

\begin{proof}
From the expression of $\phi^\prime(\z;\bdelta)$ in~\eqref{eq:dphi}, we see that the second term in~\eqref{eq:dd} is always $\leq0$. To prove that a non-smooth point cannot be a stationary point, we consider the following two cases:
\begin{enumerate}
\item If $\displaystyle\sum_{i\in\cI(\z)}2(\z-\x_i)\neq0$, then there exists a $\bdelta$ such that 
$\displaystyle\sum_{i\in\cI(\z)}2(\z-\x_i)^\T\bdelta<0$, e.g., 
$\displaystyle\bdelta=-\!\!\!\sum_{i\in\cI(\z)}(\z-\x_i)$, 
and thus $f^\prime(\z;\bdelta)<0$ since the second term in~\eqref{eq:dd} cannot be positive;
\item if $\displaystyle\sum_{i\in\cI(\z)}2(\z-\x_i)=0$, since $\z$ is a non-smooth point, there exists a $\bdelta$ such that $\cJ\neq\emptyset$, for example by choosing $\bdelta=-(\z-\x_j)$ for some $j$ such that $\|\z-\x_j\|^2=w^2$, then the second term in~\eqref{eq:dd} is strictly $<0$ while the first term $=0$, therefore $f^\prime(\z;\bdelta)<0$.
\end{enumerate}
To sum up, there always exists a $\bdelta$ such that $f^\prime(\z;\bdelta)<0$ when $\z$ is a non-smooth point for $f$, therefore such a point cannot be a stationary point.
\end{proof}

\begin{proposition}\label{prop:local_minimum}
A point $\z_\star$ is a local minimum for $f(\z)$ iff:
\begin{enumerate}
\item There does not exist a $\x_j$ such that $\|\x_j-\z\|^2=w^2$;
\item the set $\cI(\z_\star)=\{i:\|\x_i-\z_\star\|^2<w^2\}$ is not empty, and 
$\displaystyle\z_\star=\frac{1}{|\cI(\z_\star)|}\sum_{i\in\cI(\z_\star)}\x_i$.
\end{enumerate}
\end{proposition}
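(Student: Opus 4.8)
The plan is to prove both implications of the ``iff'' by reducing each to a result already in hand: the forward direction will use the stationarity machinery and Proposition~\ref{prop:not_stationary}, while the backward direction will use the local (strong) convexity from Lemma~\ref{lemma:convex_smooth}. So the characterization really says ``local minimum $=$ stationary smooth point with nonempty $\cI$,'' and items~1--2 are just the concrete form of that.

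For the ``only if'' direction, suppose $\z_\star$ is a local minimum. Then it is a stationary point in the sense of Definition~\ref{def:stationary}: if some $\bdelta$ had $f^\prime(\z_\star;\bdelta)<0$ then $f(\z_\star+\alpha\bdelta)<f(\z_\star)$ for all small $\alpha>0$, contradicting local minimality. By Proposition~\ref{prop:not_stationary} a stationary point must be a smooth point, which is exactly item~1. At a smooth point stationarity is equivalent to $\nabla f(\z_\star)=0$, so~\eqref{eq:grad} gives $\sum_{i\in\cI(\z_\star)}(\z_\star-\x_i)=0$. It then remains to rule out $\cI(\z_\star)=\emptyset$: in that case every $\x_m$ satisfies $\|\x_m-\z_\star\|^2>w^2$, and since the non-smooth set is a finite union of spheres (hence closed) and avoided by $\z_\star$, there is a ball around $\z_\star$ on which $\cI\equiv\emptyset$ and therefore $f\equiv Mw^2$, the global maximum of $f$ (a minimum of the density $\widehat p$, not a mode); such a flat plateau is not a genuine/strict local minimum and is the case item~2 is designed to exclude. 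With $\cI(\z_\star)\neq\emptyset$, dividing $\sum_{i\in\cI(\z_\star)}(\z_\star-\x_i)=0$ by $|\cI(\z_\star)|$ yields the averaging identity of item~2.

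For the ``if'' direction, assume items~1 and~2. Because $\z_\star$ is a smooth point and the non-smooth set $\bigcup_m\{\z:\|\x_m-\z\|^2=w^2\}$ is closed, there is a radius $\rho>0$ so that on the ball $B(\z_\star,\rho)$ every point is smooth and, as in the proof of Lemma~\ref{lemma:convex_smooth}, the index set is constant, $\cI(\cdot)\equiv\cI(\z_\star)$. By~\eqref{eq:hessian}, $\nabla^2 f=2|\cI(\z_\star)|\I\succ0$ on $B(\z_\star,\rho)$ since $\cI(\z_\star)\neq\emptyset$, so $f$ restricted to this ball is strongly convex; and by~\eqref{eq:grad} together with the averaging identity, $\nabla f(\z_\star)=2|\cI(\z_\star)|\z_\star-2\sum_{i\in\cI(\z_\star)}\x_i=0$. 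A point with vanishing gradient in a neighborhood on which the function is strongly convex is a strict local minimum, which finishes the proof.

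The gradient/Hessian bookkeeping is routine and already packaged in Lemma~\ref{lemma:convex_smooth}, so I would not grind through it. The one delicate point I expect to spell out is the $\cI(\z_\star)=\emptyset$ branch of the forward direction: the locally-constant plateau at height $Mw^2$ does satisfy the weak definition of a local minimum, so making the ``iff'' exact requires either reading ``local minimum'' as strict/isolated or explicitly carving out that degenerate case; everything else is a direct application of Proposition~\ref{prop:not_stationary} and Lemma~\ref{lemma:convex_smooth}.
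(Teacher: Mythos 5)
Your proof is correct and follows essentially the same route as the paper's: local minimum $\Rightarrow$ stationary $\Rightarrow$ smooth via Proposition~\ref{prop:not_stationary}, then $\nabla f=0$ gives the averaging identity, with the converse supplied by the positive-definite Hessard, i.e.\ local strong convexity from Lemma~\ref{lemma:convex_smooth}. Your added remark that the $\cI(\z_\star)=\emptyset$ plateau at height $Mw^2$ technically satisfies the weak definition of a local minimum (so the ``iff'' needs a strict reading or an explicit exclusion) is a fair point that the paper glosses over by simply calling that case a global maximum.
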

\begin{proof}
A local minimum is first of all a stationary point, which, according to Proposition~\ref{prop:not_stationary}, cannot be a non-smooth point for $f$, therefore there does not exist a $\x_j$ such that $\|\x_j-\z\|^2=w^2$.

For a smooth point, its gradient and Hessian is given in~\eqref{eq:grad} and~\eqref{eq:hessian}. Stationarity implies that $\nabla f(\z_\star)=0$, therefore $\z_\star=\frac{1}{|\cI(\z_\star)|}\sum_{i\in\cI(\z_\star)}\x_i$. If $\cI(\z_\star)$ is not empty, then $\nabla^2f(\z_\star)\succ0$, and $\z_\star$ is a local minimum; otherwise, $f(\z_\star)=Mw^2=\max f(\z)$, which is a global maximum. 
\end{proof}

\section{Convergence of Epanechnikov Mean Shift}

\begin{algorithm}[t]
	\caption{Epanechnikov Mean Shift iterates -- Redux}
	\label{alg:2}
	\begin{algorithmic}[1]
		\REQUIRE $\{\x_m\}_{m=1}^M$, $n\in[M]$
		\STATE initialize $\z^{(0)}\leftarrow\x_n$
		\LOOP
		\STATE $\cI(\z^{(t-1)}) \leftarrow \{i: \|\x_i-\z^{(t-1)}\|^2 < w^2 \}$
		\STATE $\displaystyle\z^{(t)} \leftarrow \frac{1}{|\cI(\z^{(t-1)})|}\sum_{i\in\cI(\z^{(t-1)})}\x_i$
		\IF{$\z^{(t)}=\z^{(t-1)}$}
		\STATE $\cJ(\z^{(t)}) \leftarrow \{j: \|\x_i-\z^{(t)}\|^2 = w^2 \}$
			\IF{$\cJ(\z^{(t)})=\emptyset$}
			\RETURN $\z^{(t)}$
			\ELSE
			\STATE sample $j$ from $\cJ(\z^{(t)})$
			\STATE $\displaystyle\z^{(t)} \leftarrow 
			\frac{1}{|\cI(\z^{(\!t\!-\!1\!)})|\!+\!1}\!\!
			\left( \x_j +\hspace*{-5pt}\sum_{i\in\cI(\z^{(\!t\!-\!1\!)})}\hspace*{-10pt}\x_i \right)$
			\ENDIF
		\ENDIF
		\STATE $t\leftarrow t+1$
		\ENDLOOP
	\end{algorithmic}
\end{algorithm}

Now we study the convergence of the Epanechnikov Mean Shift iterates.

\begin{lemma}\label{lemma:monotonic}
Epanechnikov Mean Shift iterates successively minimizes a local upper bound of function $f(\z)$, therefore the value of $f(\z)$ is monotonically non-increasing.
\end{lemma}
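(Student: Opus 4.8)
The plan is to recognize the Epanechnikov Mean Shift iterate as an instance of majorization--minimization and to write down explicitly the surrogate that is being minimized. Fix the current iterate $\bar\z$ and set $\cI=\cI(\bar\z)=\{i:\|\x_i-\bar\z\|^2<w^2\}$. First I would note that $\cI$ is never empty along the iterates: the run is initialized at a data point $\x_n$, so $0=\|\x_n-\x_n\|^2<w^2$ gives $n\in\cI$ initially, and if $\cI(\bar\z)\neq\emptyset$ then the next point $\z^\star=\tfrac1{|\cI|}\sum_{i\in\cI}\x_i$ again has $\cI(\z^\star)\neq\emptyset$ (otherwise $\sum_{i\in\cI}\|\x_i-\z^\star\|^2\ge|\cI|w^2$, contradicting that $\z^\star$ minimizes $\sum_{i\in\cI}\|\x_i-\z\|^2$ while $\sum_{i\in\cI}\|\x_i-\bar\z\|^2<|\cI|w^2$). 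Hence the centroid update is always well defined.

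Next I would introduce the surrogate
\[
u(\z;\bar\z)\;=\;\sum_{i\in\cI}\|\x_i-\z\|^2\;+\;\sum_{m\notin\cI}w^2 ,
\]
and check the two majorization properties term by term, using the elementary bounds $\phi(\x_m-\z)=\min(\|\x_m-\z\|^2,w^2)\le\|\x_m-\z\|^2$ and $\phi(\x_m-\z)\le w^2$; summing over $m$ gives $u(\z;\bar\z)\ge f(\z)$ for every $\z$ (so it is in fact a global majorizer, a fortiori a local upper bound). Tightness at $\bar\z$ follows because $m\in\cI$ forces $\|\x_m-\bar\z\|^2<w^2$, hence $\phi(\x_m-\bar\z)=\|\x_m-\bar\z\|^2$, while $m\notin\cI$ forces $\|\x_m-\bar\z\|^2\ge w^2$, hence $\phi(\x_m-\bar\z)=w^2$; summing, $u(\bar\z;\bar\z)=f(\bar\z)$. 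The only step that needs a moment's care is the boundary case $\|\x_m-\bar\z\|^2=w^2$, where one still has $\phi(\x_m-\bar\z)=w^2$, consistent with $m\notin\cI$.

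Then I would observe that minimizing $u(\cdot;\bar\z)$ is a plain least-squares problem: the second sum is a constant, and $\sum_{i\in\cI}\|\x_i-\z\|^2$ is strongly convex with unique minimizer $\tfrac1{|\cI|}\sum_{i\in\cI}\x_i$, which is exactly the Mean Shift update. Chaining the three facts with $\bar\z=\z^{(t-1)}$,
\[
f(\z^{(t)})\;\le\;u(\z^{(t)};\z^{(t-1)})\;\le\;u(\z^{(t-1)};\z^{(t-1)})\;=\;f(\z^{(t-1)}),
\]
where the middle inequality is optimality of $\z^{(t)}$ for $u(\cdot;\z^{(t-1)})$ and the last equality is tightness; this is the claimed monotone non-increase. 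For completeness I would add that the corrective step of Algorithm~\ref{alg:2}, which replaces a fixed point $\z^{(t)}$ by the centroid of $\{\x_i\}_{i\in\cI(\z^{(t-1)})}\cup\{\x_j\}$ when $\cJ(\z^{(t)})\neq\emptyset$, is covered by the identical argument with the enlarged index set $\cI(\z^{(t-1)})\cup\{j\}$: since $\|\x_j-\z^{(t)}\|^2=w^2$ the enlarged surrogate is still tight at $\z^{(t)}$, and the resulting decrease is in fact strict because $\x_j\neq\z^{(t)}$. I do not anticipate a genuine obstacle here; the entire content is choosing the surrogate and verifying the boundary case, so the proof is short.
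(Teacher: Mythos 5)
Your proof is correct and follows essentially the same route as the paper: you construct the identical surrogate $\overline{f}(\z|\widetilde{\z})=\sum_{i\in\cI(\widetilde{\z})}\|\z-\x_i\|^2+(M-|\cI(\widetilde{\z})|)w^2$, verify global majorization, tightness at the current iterate, and that the Mean Shift update is its minimizer, then chain the three inequalities. The extra details you supply (non-emptiness of $\cI$ along the iterates, the boundary case $\|\x_m-\bar\z\|^2=w^2$, and the corrective step with the enlarged index set) are consistent with what the paper defers to the surrounding discussion and to Lemma~\ref{lemma:decrease}.
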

\begin{proof}
At a particular point $\widetilde{\z}$, define $\overline{f}(\z|\widetilde{\z})$ as follows:
\begin{equation*}
\overline{f}(\z|\widetilde{\z}) = \sum_{i\in\cI(\widetilde{\z})}\|\z-\x_i\|^2 + (M-|\cI(\widetilde{\z})|)w^2.
\end{equation*}
It is easy to see that 
\begin{enumerate}
\item $\overline{f}(\z|\widetilde{\z})\geq f(\z)$ for all $\z$ and $\widetilde{\z}$, 
\item $\overline{f}(\widetilde{\z}|\widetilde{\z})=f(\widetilde{\z})$, and
\item $\displaystyle
\frac{1}{|\cI(\widetilde{\z})|}\sum_{i\in\cI(\widetilde{\z})}\x_i
=\arg\min_{\z}\overline{f}(\z|\widetilde{\z})$.
\end{enumerate}
This means that, at iteration $t$, Algorithm~\ref{alg:2} updates $\z$ as 
$\displaystyle\z^{(t)}=\arg\min_{\z}\overline{f}(\z|\z^{(t-1)})$.
Therefore, 
\[
f(\z^{(t-1)})=\overline{f}(\z^{(t-1)}|\z^{(t-1)})
\geq\overline{f}(\z^{(t)}|\z^{(t-1)})
\geq f(\z^{(t)}),
\]
which means the values of $f(\z)$ obtained by Algorithm~\ref{alg:2} form a  monotonically non-increasing sequence. 
\end{proof}

Lemma~\ref{lemma:monotonic} gives the Epanechnikov Mean Shift iterates in Algorithm~\ref{alg:1} a majorization-minimization interpretation, which guarantees convergence to a stationary point in a lot of cases. Unfortunately none of the existing convergence results applies here---they either require both $f(\z)$ and $\overline{f}(\z|\widetilde{\z})$ to be smooth, or more generally 
$f^\prime(\widetilde{\z};\bdelta) = \overline{f}^\prime(\widetilde{\z}|\widetilde{\z};\bdelta)$ for all $\widetilde{\z}$ and $\bdelta$~\cite{razaviyayn2013unified}. 
Indeed, it is possible that 
$\displaystyle\z^{(t-1)}=\z^{(t)}=\arg\min_{\z}\overline{f}(\z|\z^{(t-1)})$,
in which case the algorithm converges, but there exists a $\x_j$ such that $\|\x_j-\z^{(t)}\|^2=w^2$, meaning it is a non-smooth point, thus cannot be a stationary point according to Proposition~\ref{prop:not_stationary}.
Fortunately, we find that this issue can be fixed with negligible extra computations: If such case happens, we only need to sample one $\x_j$ such that $\|\x_j-\z^{(t)}\|^2=w^2$, and then re-update $\z^{(t)}$ as the average of $\x_j$ together with all the points in $\cI(\z^{(t-1)})$. This can be instead interpreted as minimizing the following slightly different upper bound:
\begin{align*}
&\z^{(t)} =
\arg\min_{\z}\overline{f}_\sharp(\z|\z^{(t-1)}) \\
&= \|\z\!-\!\x_j\|^2 + \hspace*{-12pt}
\sum_{i\in\cI(\z^{(t-1)})}\hspace*{-15pt}\|\z\!-\!\x_i\|^2 + 
(M\!-\!|\cI(\z^{(t\!-\!1)})|\!-\!1)w^2.
\end{align*}
This elaborated procedure is fleshed out in Algorithm~\ref{alg:2}. We show that Algorithm~\ref{alg:2} provably finds a local optimum in a finite number of iterations.

\begin{lemma}\label{lemma:decrease}
The value of $f(\z)$ obtained by Algorithm~\ref{alg:2} is strictly decreasing, 
unless $\z^{(t)}=\z^{(t-1)}$. 
\end{lemma}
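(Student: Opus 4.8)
The claim splits naturally into two cases according to which branch of Algorithm~\ref{alg:2} produces $\z^{(t)}$. First I would handle the ``ordinary'' update, where $\z^{(t)} = \frac{1}{|\cI(\z^{(t-1)})|}\sum_{i\in\cI(\z^{(t-1)})}\x_i$ and this equals $\arg\min_{\z}\overline{f}(\z|\z^{(t-1)})$. By Lemma~\ref{lemma:monotonic} we already have $f(\z^{(t-1)}) \geq \overline{f}(\z^{(t)}|\z^{(t-1)}) \geq f(\z^{(t)})$, so I only need to argue that the middle inequality is \emph{strict} whenever $\z^{(t)} \neq \z^{(t-1)}$. Since $\overline{f}(\cdot|\z^{(t-1)})$ is a strictly convex quadratic (its Hessian is $2|\cI(\z^{(t-1)})|\I \succ \bm{0}$, using that $\cI(\z^{(t-1)})$ is nonempty because $\x_n$ or the previous iterate always lies in its own ball of radius $w$), it has a unique minimizer, namely $\z^{(t)}$; hence $\overline{f}(\z^{(t-1)}|\z^{(t-1)}) > \overline{f}(\z^{(t)}|\z^{(t-1)})$ as soon as $\z^{(t-1)}\neq\z^{(t)}$. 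Chaining with $\overline{f}(\z^{(t-1)}|\z^{(t-1)}) = f(\z^{(t-1)})$ and $\overline{f}(\z^{(t)}|\z^{(t-1)}) \geq f(\z^{(t)})$ gives the strict decrease.

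Second, I would treat the ``perturbation'' branch, where the ordinary update returned $\z^{(t)} = \z^{(t-1)}$ but $\cJ(\z^{(t)}) \neq \emptyset$, so the algorithm re-sets $\z^{(t)}$ to the minimizer of $\overline{f}_\sharp(\z|\z^{(t-1)})$ (the average of the points in $\cI(\z^{(t-1)})$ together with one boundary point $\x_j$). Here I need that the new $\z^{(t)}$ genuinely differs from $\z^{(t-1)}$ and that $f$ strictly decreases. For the first point: $\z^{(t-1)}$ is the average of $\cI(\z^{(t-1)})$, while the new $\z^{(t)}$ is the average of $\cI(\z^{(t-1)})\cup\{\x_j\}$ with $\|\x_j - \z^{(t-1)}\| = w > 0$; since $\x_j \neq \z^{(t-1)}$, adding it strictly moves the centroid, so $\z^{(t)}\neq\z^{(t-1)}$. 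For the strict decrease, I use the majorization chain with $\overline{f}_\sharp$: by construction $\overline{f}_\sharp(\z|\widetilde{\z}) \geq f(\z)$ for all $\z$ (each squared-distance term dominates the corresponding $\phi$, and the leftover $(M-|\cI|-1)w^2$ handles the rest, with the $\x_j$ term contributing $\|\z-\x_j\|^2 \geq \phi(\x_j-\z)$), and $\overline{f}_\sharp(\z^{(t-1)}|\z^{(t-1)}) = f(\z^{(t-1)})$ because $\|\x_j-\z^{(t-1)}\|^2 = w^2 = \phi(\x_j - \z^{(t-1)})$ and all $i\in\cI$ satisfy $\|\x_i-\z^{(t-1)}\|^2 = \phi(\x_i-\z^{(t-1)})$. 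Then strict convexity of the quadratic $\overline{f}_\sharp(\cdot|\z^{(t-1)})$ together with $\z^{(t)}\neq\z^{(t-1)}$ yields $f(\z^{(t-1)}) = \overline{f}_\sharp(\z^{(t-1)}|\z^{(t-1)}) > \overline{f}_\sharp(\z^{(t)}|\z^{(t-1)}) \geq f(\z^{(t)})$.

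The step I expect to require the most care is the equality $\overline{f}_\sharp(\z^{(t-1)}|\z^{(t-1)}) = f(\z^{(t-1)})$ in the perturbation branch: it hinges on the fact that the boundary index $j$ satisfies $\|\x_j-\z^{(t-1)}\|^2 = w^2$ \emph{exactly}, so that the term $\|\z^{(t-1)}-\x_j\|^2$ in $\overline{f}_\sharp$ coincides with $\phi(\x_j-\z^{(t-1)}) = \min(w^2,w^2) = w^2$ rather than overshooting. One must also double-check that no index outside $\cI(\z^{(t-1)})\cup\{j\}$ is mishandled — every such index $m$ has $\|\x_m - \z^{(t-1)}\|^2 \geq w^2$, so $\phi(\x_m-\z^{(t-1)}) = w^2$, which is precisely what the $(M-|\cI|-1)w^2$ slack accounts for. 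Everything else is a routine application of strict convexity of a positive-definite quadratic and the majorization inequalities already set up in the proof of Lemma~\ref{lemma:monotonic}.
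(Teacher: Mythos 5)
Your proposal is correct and follows essentially the same route as the paper: the same case split between the ordinary centroid update (majorizer $\overline{f}$) and the boundary-perturbation update (majorizer $\overline{f}_\sharp$), with the same tightness checks at $\z^{(t-1)}$. The only difference is that you conclude strictness from uniqueness of the minimizer of a strictly convex quadratic, whereas the paper computes the decrease explicitly as $|\cI(\z^{(t-1)})|\,\|\z^{(t-1)}-\z^{(t)}\|^2$ (resp.\ $w^2/(|\cI(\z^{(t-1)})|+1)$) --- a quantitative form that the finite-termination argument of Theorem~\ref{thm:converge} later reuses via~\eqref{eq:ineq}, so the explicit computation is worth keeping.
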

\begin{proof}
If $\z^{(t\!-\!1)}\neq\frac{1}{|\cI(\z^{(t\!-\!1)})|}\sum_{i\in\cI(\z^{(t\!-\!1)})}\x_i$, 
then we update
$\z^{(t)}=\frac{1}{|\cI(\z^{(t\!-\!1)})|}\sum_{i\in\cI(\z^{(t\!-\!1)})}\x_i$, and
\begin{align*}
& f(\z^{(t-1)}) - f(\z^{(t)}) \\
&\geq \overline{f}(\z^{(t-1)}|\z^{(t-1)}) - \overline{f}(\z^{(t)}|\z^{(t-1)})\\
&= \sum_{i\in\cI(\z^{(t\!-\!1)})}\|\z^{(t-1)}-\x_i\|^2 - \sum_{i\in\cI(\z^{(t-1)})}\|\z^{(t)}-\x_i\|^2 \\
&= \sum_{i\in\cI(\z^{(t\!-\!1)})} \hspace*{-5pt} \left( \|\z^{(t-1)}\|^2 - 2\x_i^\T\z^{(t-1)} - \|\z^{(t)}\|^2 + 2\x_i^\T\z^{(t)} \right)\\
&= |\cI(\z^{(t\!-\!1)})
|\left(\|\z^{(\!t\!-\!1\!)}\|^2 - 2\z^{(\!t\!)\T}\z^{(\!t\!-\!1\!)} - \|\z^{(\!t\!)}\|^2 + 2\|\z^{(\!t\!)}\|^2\right) \\
&= \left|\cI(\z^{(t-1)})\right|
\left\|\z^{(t-1)}-\z^{(t)}\right\|^2 >0.
\end{align*}
Similarly, if $\z^{(t\!-\!1)}=\frac{1}{|\cI(\z^{(t\!-\!1)})|}\sum_{i\in\cI(\z^{(t\!-\!1)})}\x_i$, 
we have
\begin{align*}
& f(\z^{(t-1)}) - f(\z^{(t)}) \\
&\geq \overline{f}_\sharp(\z^{(t-1)}|\z^{(t-1)}) - \overline{f}_\sharp(\z^{(t)}|\z^{(t-1)})\\
&= \left(|\cI(\z^{(t-1)})|+1\right)
\left\|\z^{(t-1)}-\z^{(t)}\right\|^2 \\
&= \left(|\cI(\z^{(t-1)})|+1\right)
\left\|\frac{1}{|\cI(\z^{(t\!-\!1)})|}\sum_{i\in\cI(\z^{(t\!-\!1)})}\hspace*{-10pt}\x_i \right.\\
&\hspace*{.15\textwidth}\left.- \frac{1}{|\cI(\z^{(t\!-\!1)})|+1}
\left(\x_j+\hspace*{-10pt}
\sum_{i\in\cI(\z^{(t\!-\!1)})}\hspace*{-10pt}\x_i\right)\right\|^2\\
&= \frac{1}{|\cI(\z^{(t\!-\!1)})|+1}
\left\|\z^{(t-1)}-\x_j\right\|^2 \\
&= \frac{w^2}{|\cI(\z^{(t\!-\!1)})|+1}>0.
\end{align*}
\end{proof}

\begin{theorem}\label{thm:converge}
Algorithm~\ref{alg:2} \textbf{terminates} at a local optimum of~\eqref{eq:f} in a finite number of iterations.
\end{theorem}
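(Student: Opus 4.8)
The plan is to combine three ingredients already in hand: the strict descent guarantee of Lemma~\ref{lemma:decrease}, a finiteness argument on the set of values the iterates can take, and the exact characterization of local minima in Proposition~\ref{prop:local_minimum}. The first step is to observe that \emph{every} point visited by Algorithm~\ref{alg:2} is the centroid of a nonempty subset of $\{\x_1,\dots,\x_M\}$: the initial point $\z^{(0)}=\x_n$ is the centroid of $\{\x_n\}$, an ordinary update outputs $\frac{1}{|\cI(\z^{(t-1)})|}\sum_{i\in\cI(\z^{(t-1)})}\x_i$, and the augmented update outputs the centroid of $\{\x_i:i\in\cI(\z^{(t-1)})\}\cup\{\x_j\}$. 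To make these well defined I need $\cI(\z^{(t)})\neq\emptyset$ for all $t$, which I would prove by induction: since the centroid of a finite point set minimizes the sum of squared distances to that set, the member of $\{\x_i:i\in\cI(\z^{(t-1)})\}$ (resp.\ of that set augmented by $\x_j$) nearest to $\z^{(t)}$ has squared distance to $\z^{(t)}$ at most the average of the squared distances of those points from $\z^{(t-1)}$, and that average is strictly less than $w^2$ whenever $\cI(\z^{(t-1)})\neq\emptyset$; hence that index belongs to $\cI(\z^{(t)})$. It follows that the iterates lie in a set of cardinality at most $2^M-1$, and therefore $f$ evaluated along the iterates takes only finitely many distinct values.

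Next I would show that each pass through the \textbf{LOOP} of Algorithm~\ref{alg:2} either returns or strictly decreases $f$, by matching the three possible scenarios to the two cases of Lemma~\ref{lemma:decrease}. If the averaged update moves the iterate, i.e.\ $\z^{(t-1)}\neq\frac{1}{|\cI(\z^{(t-1)})|}\sum_{i\in\cI(\z^{(t-1)})}\x_i$, the first case of Lemma~\ref{lemma:decrease} gives $f(\z^{(t-1)})-f(\z^{(t)})\geq|\cI(\z^{(t-1)})|\,\|\z^{(t-1)}-\z^{(t)}\|^2>0$. If the averaged update leaves the iterate fixed and $\cJ(\z^{(t)})=\emptyset$, the algorithm returns. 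If the averaged update leaves the iterate fixed but $\cJ(\z^{(t)})\neq\emptyset$, the augmented update is executed and the second case of Lemma~\ref{lemma:decrease} gives $f(\z^{(t-1)})-f(\z^{(t)})=w^2/(|\cI(\z^{(t-1)})|+1)>0$. Consequently the values of $f$ along the non-terminating loop passes form a strictly decreasing sequence drawn from a finite set, so that sequence must be finite; equivalently, the algorithm reaches the return statement after at most $2^M-1$ passes.

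Finally, whenever Algorithm~\ref{alg:2} returns a point $\z^{(t)}$, the return condition forces simultaneously $\z^{(t)}=\z^{(t-1)}=\frac{1}{|\cI(\z^{(t-1)})|}\sum_{i\in\cI(\z^{(t-1)})}\x_i$ and $\cJ(\z^{(t)})=\{j:\|\x_j-\z^{(t)}\|^2=w^2\}=\emptyset$; combined with $\cI(\z^{(t)})\neq\emptyset$ from the first step, these are precisely the two conditions of Proposition~\ref{prop:local_minimum}, so $\z^{(t)}$ is a local minimum of $f$, i.e.\ a mode of the KDE. I expect the main obstacle to be the bookkeeping in the first two steps rather than any deep difficulty: one must be careful to verify that $\cI(\z^{(t)})$ never becomes empty (so the centroids are well defined throughout and Proposition~\ref{prop:local_minimum} is applicable at termination) and to check that each of the three loop-pass scenarios lands in the correct case of Lemma~\ref{lemma:decrease}; once that is pinned down, the conclusion follows from the elementary fact that a strictly decreasing sequence confined to a finite set terminates.
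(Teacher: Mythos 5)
Your proof is correct, and the finite-termination part takes a genuinely different (and arguably cleaner) route than the paper's. The paper argues quantitatively: it extracts from Lemma~\ref{lemma:decrease} the loose inequality $f(\z^{(t-1)})-f(\z^{(t)})\geq\|\z^{(t-1)}-\z^{(t)}\|^2$, telescopes it, and then constructs an explicit $\lambda>0$ lower-bounding every per-step movement $\|\z^{(t-1)}-\z^{(t)}\|^2$, defined as the minimum squared distance between centroids of distinct subsets in a finite family $\cS$ of index sets; this yields $T\leq(f(\z^{(0)})-f(\z^{(T)}))/\lambda$, and the delicate step is showing $\lambda>0$, i.e.\ that distinct sets in $\cS$ cannot share a centroid. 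You instead observe that every iterate is the centroid of a nonempty subset of $\{\x_m\}$, hence confined to a set of at most $2^M-1$ points, so the strictly decreasing sequence of $f$-values drawn from a finite set must stop; this sidesteps the $\lambda>0$ argument entirely at the cost of a cruder (but still finite, which is all the theorem asserts) bound of $2^M-1$ passes. Your induction establishing $\cI(\z^{(t)})\neq\emptyset$ — using that the centroid minimizes the sum of squared distances, so some member of the averaged set stays within radius $w$ — also differs from the paper's, which instead notes that $\cI(\z)=\emptyset$ would force $f(\z)=Mw^2=\max f$, impossible since $f(\z^{(0)})<Mw^2$ and $f$ is non-increasing along the iterates; both arguments are valid. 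The remaining pieces (each loop pass either returns or strictly decreases $f$ via the two cases of Lemma~\ref{lemma:decrease}, and the return condition together with $\cI(\z^{(t)})\neq\emptyset$ matching exactly the hypotheses of Proposition~\ref{prop:local_minimum}) coincide with the paper's reasoning.
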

\begin{proof}
We first prove that Algorithm~\ref{alg:2} terminates at a local optimum of~\eqref{eq:f}. The loss function $f(\z)$ is bounded from below, and using Algorithm~\ref{alg:2}, it is monotonically non-increasing (cf. Lemma~\ref{lemma:monotonic}), so it converges to a certain value. Lemma~\ref{lemma:decrease} further shows that $f(\z)$ strictly decreases unless $\z^{(t)}=\z^{(t-1)}$, in which case $\z^{(t)}$ cannot be a non-smooth point as we showed in the proof of Lemma~\ref{lemma:decrease}. Notice that throughout the iterations $\cI(\z^{(t)})$ cannot be empty, because otherwise $f(\z)$ takes the maximum value, but since we start with $\z^{(0)}=\x_m$, 
$f(\z^{(0)})<\max f(\z)$. Invoking Proposition~\ref{prop:local_minimum}, we conclude that Algorithm~\ref{alg:2} terminates at a local optimum. 

Now suppose Algorithm~\ref{alg:2} terminates in $T$ number of iterations, we show that $T$ can be upperbounded by a finite number that only depends on the data set $\{\x_m\}$ and the bandwidth $w$ that we choose. From the proof of Lemma~\ref{lemma:decrease}, we get the (very loose) inequality
\begin{equation}\label{eq:ineq}
f(\z^{(t-1)})-f(\z^{(t)}) \geq \| \z^{(t-1)} - \z^{(t)} \|^2.
\end{equation}
Summing up both sides for $t=1,...,T$, we have
\begin{equation}\label{eq:ineq_sum}
f(\z^{(0)})-f(\z^{(T)}) \geq \sum_{t=1}^{T}\| \z^{(t-1)} - \z^{(t)} \|^2.
\end{equation}
We have shown that each of the terms on the right-hand-side is positive, unless the algorithm has terminated. If we can further find a quantity $\lambda>0$ such that
\begin{equation}\label{eq:gamma_ineq}
\| \z^{(t-1)} - \z^{(t)} \|^2 \geq \lambda > 0, \forall~t=1,...,T,
\end{equation}
then we can easily conclude that
\begin{equation}\label{eq:boundT}
T \leq \frac{f(\z^{(0)})-f(\z^{(T)})}{\lambda},
\end{equation}
which is a finite number.

To find such a $\lambda$, we note that each $\z^{(t)}$ is the average of a non-empty subset of points from the data set $\{\x_m\}$, which are all the points that can be enclosed in a Euclidean ball with radius $w$ (except for $\z^{(0)}$, which is simply one data point). 
Define
\[
\cS = \{ \cI(\z) \cup \cJ(\z;\bdelta): \forall~\z,\bdelta\in\R^d \},
\]
where $\cI(\z)$ and $\cJ(\z;\bdelta)$ are as defined in Lemma~\ref{lemma:dd}.
We know that $|\cS|<2^M$, since it is a union of subsets of $\{\x_m\}$, and there can be at most $2^M-1$ non-empty subsets of $\{\x_m\}$.
Then we define
\begin{equation}\label{eq:gamma}
\lambda = \min_{\substack{\mathcal{K},\mathcal{L}\in\cS \\ \mathcal{K}\neq\mathcal{L}}}
\left\| \frac{1}{|\mathcal{K}|}\sum_{k\in\mathcal{K}}\x_k - 
\frac{1}{|\mathcal{L}|}\sum_{\ell\in\mathcal{L}}\x_\ell \right\|^2,
\end{equation}
which exists since $\cS$ is a finite set, and it satisfies~\eqref{eq:gamma_ineq}. This $\lambda$ is also strictly positive: If
\begin{equation}\label{eq:gamma_eq}
\frac{1}{|\mathcal{K}|}\sum_{k\in\mathcal{K}}\x_k = 
\frac{1}{|\mathcal{L}|}\sum_{\ell\in\mathcal{L}}\x_\ell,
\end{equation}
the distance between this point and every point in either $\mathcal{K}$ or $\mathcal{L}$ is no greater than $w$. According to the construction of $\cS$, \eqref{eq:gamma_eq} implies $\mathcal{K}=\mathcal{L}$, contradicting $\mathcal{K}\neq\mathcal{L}$ in~\eqref{eq:gamma}.

To sum up, we have shown that there exists $\lambda>0$ that satisfies~\eqref{eq:gamma_ineq}, thus \eqref{eq:boundT} holds, meaning Algorithm~\ref{alg:2} terminates in a finite number of iterations.
\end{proof}

We remark that~\eqref{eq:boundT} is a gross over-estimate of the number of iterations. We omitted a rather big scaling factor on the right-hand-side of~\eqref{eq:ineq}, for the sake of simplicity in~\eqref{eq:ineq_sum}. The point is to show that $T$ can indeed be bounded by a finite number. In practice, our observation is that Epanechnikov Mean Shift terminates in a very smaller number of iterations, usually less than 10.

As we can see, even though we are trying to optimize a non-convex and non-smooth function, we obtain a very strong convergence result that Epanechnikov Mean Shift reaches (not \emph{approaches}) a local optimum in a finite number of iterations. From the proof one can see that the non-smoothness of the Epanechnikov kernel actually helps obtaining such a nice convergence property. It has a very different flavor as the smooth optimization based analyses. For example, the work in ~\cite{Carreira-Perpinan2007} uses a smooth Gaussian kernel, and the analysis ends up with an asymptotic convergence and a sub-linear rate in the worst case.

\section{Mean Shift Deflation for Non-overlapping Spherical Gaussian Mixtures}

The main computation bottleneck for Mean Shift is obviously the fact that the Mean Shift iterates (Algorithm~\ref{alg:2}) is run at \emph{every} data point, which is potentially a large set. In this section we provide a heuristic to avoid this computation bottleneck, under the probabilistic generative model that each cluster is generated from a spherical Gaussian with variance $\sigma^2\I$, and they are non-overlapping.

\begin{figure}[t]
	\centering
	\includegraphics[width=.6\textwidth]{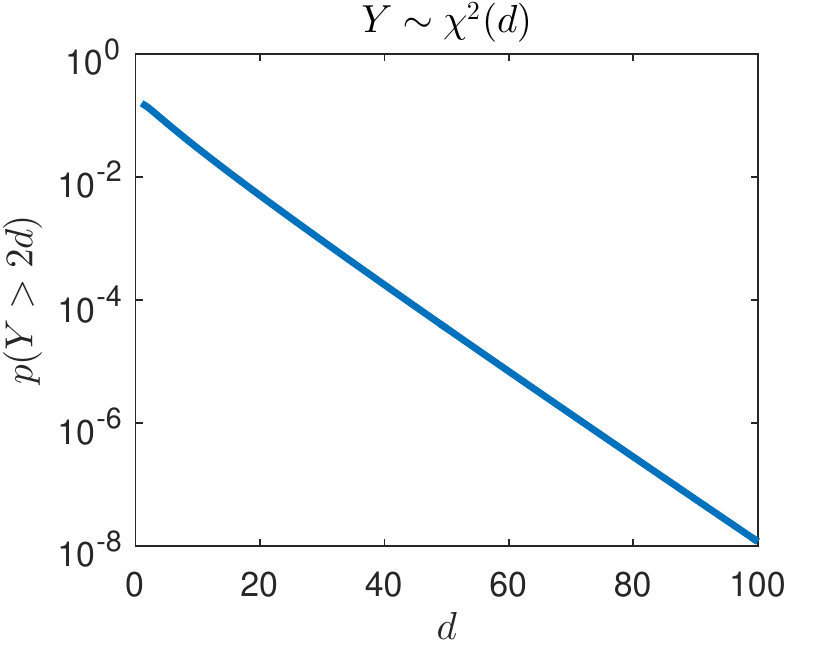}
	\caption{Probability that $Y>2d$, for $Y\sim\chi^2(d)$.}
	\label{fig:chi2}
\end{figure}

Suppose the set of data points 
$\{\x_m\}_{m=1}^M\!\subseteq\!\mathbb{R}^d$ 
comes from a mixture of Gaussian distributions 
$\GN(\bmu_k,\!\sigma^2\I)$, $\forall k\!=\!1,...,K$, 
with different means 
$\{\bmu_k\}_{k=1}^K\!\subseteq\!\R^d$ 
but the same covariance matrix $\sigma^2\I$. For a specific data point $\x_m$ coming from $\GN(\bmu_k,\!\sigma^2\I)$, the random variable $Y\!=\!\|\x_m\!-\!\bmu_k\|^2/\sigma^2$ follows the $\chi^2$-distribution with $d$ degrees of freedom, denoted as $\chi^2(d)$. 

An interesting property of the $\chi^2$-distribution is that the probability that $Y>\gamma d$ becomes almost negligible for large $d$ and $\gamma>1$. 
As an example, Figure~\ref{fig:chi2} shows the probability that $Y>2d$, as $d$ increases. 
This can be seen from the Chernoff bound on the tail probability of the $\chi^2$-distribution~\cite{dasgupta2003elementary}: for $\gamma>1$, we have that $\text{Pr}(Y>\gamma d)\leq(\gamma e^{1-\gamma})^{d/2}$. 
It can be easily shown that $\gamma e^{1-\gamma}<1$ when $\gamma>1$, implying that $\text{Pr}(Y>\gamma d)$ goes to zero \emph{at least exponentially} as $d$ increases. 

This observation inspires us to use $\sqrt{2d}\sigma$ as the bandwidth, since a Euclidean ball with radius $\sqrt{2d}\sigma$ encloses almost all points coming from $\GN(\bmu_k,\!\sigma^2\I)$ if it is centered at~$\bmu_k$. Furthermore, if $\min\|\bmu_k-\bmu_\ell\|>2\sqrt{d}\sigma$, the Gaussian components are non-overlapping, thus the ball centered at $\bmu_k$ will \emph{only} contain points coming from $\GN(\bmu_k,\!\sigma^2\I)$. Therefore, once we find one local optimum of $\widehat{p}(\z)$ that is presumably close to a $\bmu_k$, we can safely group all data points that are within radius $\sqrt{2d}\sigma$ from $\bmu_k$ and declare them as a cluster. This idea leads to the Epanechnikov Mean Shift \emph{deflation} shown in Algorithm~\ref{alg:3}.

\begin{algorithm}[t]
	\caption{Epanechnikov Mean Shift deflation}
	\label{alg:3}
	\begin{algorithmic}[1]
		\REQUIRE $\{\x_m\}_{m=1}^M$, $w^2=2d\sigma^2$
		\STATE $\cM \leftarrow \{1,2,...,M\}$, $k\leftarrow1$
		\WHILE{$\cM \neq \emptyset$}
		\STATE sample $m$ from $\cM$
		\STATE run Algorithm~\ref{alg:2} initialized at $\x_m$, outputs $\bmu_k$
		\STATE declare $\{\x_i\}_{i\in\cI(\bmu_k)}$ as cluster $k$
		\STATE $\cM\leftarrow\cM\setminus\cI(\bmu)$
		\STATE $k\leftarrow k+1$
		\ENDWHILE
	\end{algorithmic}
\end{algorithm}

As shown in Figure~\ref{fig:syn_exp}, this simple procedure obtains extremely good clustering performance, while reducing the computational complexity down to even smaller than that of Lloyd's $K$-means algorithm. Notice that the bandwidth $w$ is still estimated via leave-one-out cross-validation. Under strong generative models as in this case, however, we do observe that the estimated bandwidth $w$ is very close to~$\sqrt{2d}\sigma$.

\section{Conclusion}
We study the Epanechnikov Mean Shift algorithm, which is observed to work well but lacked theoretical analysis on its performance as of this writing. After in-depth study on estimated density $\widehat{p}(\z)$, with particular focus on its non-smoothness, we fixed an issue that could potentially affect its convergence, and showed that the Epanechnikov Mean Shift iterate terminates at a local optimum within finite number of iterations. A deflation-based variant of Epanechnikov Mean Shift is also proposed to avoid initializing the iterates at every data point, which reduces the computation considerably, and maintains good clustering performance under non-overlapping spherical Gaussian mixture assumptions.

\bibliographystyle{abbrv}
\bibliography{refs}

\end{document}